\def\eqref#1{equation~\ref{#1}}
\def\1{\bm{1}}
\DeclareMathAlphabet{\mathsfit}{\encodingdefault}{\sfdefault}{m}{sl}
\SetMathAlphabet{\mathsfit}{bold}{\encodingdefault}{\sfdefault}{bx}{n}
\DeclareMathOperator{\sign}{sign}
\newtheorem{theorem}{Theorem}
\newcommand{\pa}{\textcolor{black}}
\def\BibTeX{{\rm B\kern-.05em{\sc i\kern-.025em b}\kern-.08em
    T\kern-.1667em\lower.7ex\hbox{E}\kern-.125emX}}
\begin{document}

\title{Online Probabilistic Model Identification Using Adaptive Recursive MCMC \thanks{Disclaimer: This work has been accepted for publication in the International Joint Conference on Neural Networks (IJCNN). © 2023 IEEE.  Personal use of this material is permitted.  Permission from IEEE must be obtained for all other uses, in any current or future media, including reprinting/republishing this material for advertising or promotional purposes, creating new collective works, for resale or redistribution to servers or lists, or reuse of any copyrighted component of this work in other works.}
}

\author{\IEEEauthorblockN{ Pedram Agand}
\IEEEauthorblockA{\textit{Department of Computer Science,} \\
\textit{Simon Fraser University}\\
Burnaby, Canada. \\
pagand@sfu.ca}
\and
\IEEEauthorblockN{ Mo Chen}
\IEEEauthorblockA{\textit{Department of Computer Science,} \\
\textit{Simon Fraser University}\\
Burnaby, Canada. \\
0000-0001-8506-3665}
\and
\IEEEauthorblockN{ Hamid D. Taghirad}
\IEEEauthorblockA{\textit{Department of Electrical Engineering,} \\
\textit{K. N. Toosi University of Technology}\\
Tehran, Iran. \\
0000-0002-0615-6730}
}

\maketitle

\begin{abstract}
Although the Bayesian paradigm offers a formal framework for estimating the entire probability distribution over uncertain parameters, its online implementation can be challenging due to high computational costs.
We suggest the Adaptive Recursive Markov Chain Monte Carlo (ARMCMC) method, which eliminates the shortcomings of conventional online techniques while computing the entire probability density function of model parameters.
The limitations to Gaussian noise, the application to only linear in the parameters (LIP) systems, and the persistent excitation (PE) needs  are some of these drawbacks.
In ARMCMC, a temporal forgetting factor (TFF)-based variable jump distribution is proposed. The forgetting factor can be presented adaptively using the TFF in many dynamical systems as an alternative to a constant hyperparameter. By offering a trade-off between exploitation and exploration, the specific jump distribution has been optimised towards hybrid/multi-modal systems that permit inferences among modes. These trade-off are adjusted based on parameter evolution rate. We demonstrate that ARMCMC requires fewer samples than conventional MCMC methods to achieve the same precision and reliability.
We demonstrate our approach using parameter estimation in a soft bending actuator and the Hunt-Crossley dynamic model, two challenging hybrid/multi-modal benchmarks. 
Additionally, we compare our method with recursive least squares and the particle filter, and show that our technique has significantly more accurate point estimates as well as a decrease in tracking error of the value of interest.
\end{abstract}

\begin{IEEEkeywords}
MCMC, Bayesian optimization, hybrid/multi-modal systems, temporal forgetting factor.
\end{IEEEkeywords}

\section{Introduction}
Bayesian methods are powerful tools to not only obtain a numerical estimate of a parameter but also to give a measure of confidence  \cite{bishop2006pattern}. \pa{This is  obtained by calculating the probability distribution of parameters rather than a point estimate, which is prevalent in frequentist paradigms} \cite{NIPS2018_8216,agand2017teleoperation}. 
One of the main advantages of probabilistic  frameworks is that they enable decision making under uncertainty. In addition, knowledge fusion is significantly facilitated in probabilistic  frameworks; different sources of data or observations can be combined according to their level of certainty in a principled manner \cite{agand2019adaptive}.
Using credible intervals instead of confidence intervals \cite{NIPS2019_8868}, an absence of over parameterized phenomena \cite{bishop2006pattern}, and evaluation in the presence of limited number of observed data \cite{joho2013nonparametric} are other distinct features of this framework. 

Nonetheless, Bayesian inference requires high computational effort for obtaining the whole probability distribution and requires prior  general knowledge about the noise distribution.
Consequently, strong simplifying assumptions were often made (e.g. calculating specific features of the model parameters distribution rather than the whole distribution). 
One of the most effective methods for Bayesian inferences  is  Markov Chain Monte Carlo  (MCMC).
In the field of system identification, MCMC variants such as the one proposed by \cite{green2015bayesian} are mostly focused on offline system identification.
This is partly due to computational challenges which prevent its real-time use. 
There are extensive research in the literature which investigate how to increase sample efficiency (e.g. \cite{nori2014r2,mandel2016efficient}). 
The authors in \cite{green1995reversible} first introduced reversible jump Markov chain Monte Carlo (RJMCMC) as a method to address the model selection problem. In this method, an extra pseudo-random variable is defined to address dimension mismatch. There are further extensions of MCMC in the literature; however, there is a lack of variants suitable for online estimation. One example which claims to have often much faster convergence is No-U-Turn Sampler (NUTS), which can adapt proposals ``on the fly'' \cite{hoffman2014no}. 
 
\pa{If the general form of the relation between inputs and outputs is known either by physical relation or knowing the basis function, parametric identification techniques are an effective way to model a system. To identify the parameters in a known model, researchers propose frequency (\cite{lin2019frequency}) or time domain (\cite{agand2022human}) approaches. Noisy measurements, inaccuracy, inaccessibility, and costs are typical challenges that limit direct measurement of unknown parameters in a physical/practical system  \cite{agand2016particle}. Parameter identification techniques have been extensively used in different subject areas including but not limited to chemistry, robotics, fractional models and health sectors \cite{yang2020comprehensive, kim2019robotic,yu2022life,agand2016transparent,houssein2021enhanced}.  For instance, motion filtering and force prediction in robotics applications are important fields of study with interesting challenges which makes them suitable test cases for Bayesian inferences \cite{saar2018model}. }

\pa{Different parametric  identification methods have been proposed in the literature for linear and Gaussian noise  \cite{pmlr-v80-wang18f}; however, in cases of  nonlinear hybrid/multi-modal systems (e.g. Hunt-Crossley or fluid soft bending actuator) or systems with non-Gaussian noise (e.g. impulsive disturbance), there is no optimal solution for the identification problem. In addition, in cases that the assumed model for the system is not completely valid,  MCMC implementation of Bayesian approach can provide reasonable inferences.}  Authors in \cite{wang2019parameter} utilize the  least square methods for nonlinear physical modeling of air dynamics in pneumatic soft actuator.   A method to determine the damping term in the Hunt-Crossley model was proposed in \cite{carvalho2019exact}.  A single-stage method for the estimation of the Hunt-Crossley model is proposed by \cite{haddadi2012real} which requires some restrictive conditions to calculate the parameters. \pa{ Moreover, the method does not offer any solution for discontinuity  in the dynamic model, which is common in the transition phase from contact to free motion and vice versa \cite{williams2018robust}.}

This paper proposes a new technique, Adaptive Recursive  Markov Chain Monte Carlo (ARMCMC), to address several weaknesses of traditional online identification methods, such as solely being applicable to systems Linear in Parameters (LIP), having Persistent Excitation (PE) requirements, and assuming Gaussian noise. 
ARMCMC is an online method that takes advantage of the previous posterior distribution whenever there is no abrupt change in the parameter distribution. 
To achieve this, we define a new \emph{variable jump distribution} that accounts for the degree of model mismatch using a \textit{temporal forgetting factor (TFF)}. 
The TFF is computed from a model mismatch index and determines whether ARMCMC employs modification or reinforcement to either restart or refine the estimated parameter distribution. As this factor is a function of the observed data rather than a simple user-defined constant, it can effectively adapt to the underlying dynamics of the system. We demonstrate our method using two different examples: a soft bending actuator and  the Hunt-Crossley model. We show favorable performance compared to the state-of-the-art baselines.


\section{PRELIMINARIES}
\subsection{Problem statement}
In the Bayesian paradigm,  parameter estimations are given in the form of the posterior probability density function (pdf); this pdf can be continuously updated as new data points are received.  
Consider the following general  model:
\begin{equation}
Y=F_j(X,\theta_j)+\nu_j, ~~\forall j \in \{1,2, \ldots, m\},
\label{eq:bays44444444444444}
\end{equation}
\noindent where $Y$, $X$,   $\theta$, and $\nu$   are concurrent output, input,  model parameters set and noise vectors, respectively. For a hybrid system, there can be $m$ different nonlinear functions ($F_j$) with different parameters set ($\theta_j$) for each mode. For multi-modal systems, there can be $m$ different noise distribution $\nu_j$ for each mode.  To calculate the posterior pdf, the observed data (input/output pairs) along with a prior distribution are combined via Bayes' rule \cite{khatibisepehr2013design}. 
The Bayesian rule is defined as follows \cite{bishop2006pattern}.
\begin{equation}
P(\theta|D)=\frac{P(D|\theta)P(\theta)}{P(D)},
\label{eq:bays}
\end{equation}

where, $P(\theta)$ is the prior probability of parameters, $P(D)=\int P(D|\theta)P(\theta)d\theta,$ is called as evidence,  $P(D|\theta)$ is the likelihood function,  and $P(\theta|D)$ is posterior probability of parameters given the data. 
We will be applying updates to the posterior pdf using batches of data points; hence, it will be convenient to partition the data as follows:
\begin{equation}
D^t=\{(X,Y)_{t_m+1},(X,Y)_{t_m+2},\cdots,(X,Y)_{t_m+N_s+1}\},
\label{eq:data}
\end{equation}
where $N_s=T_s/T$ is the number of data points in each data pack with $T, T_s$ being  the data  and  algorithm sampling times, respectively. This partitioning is convenient for online applications, as $D^{t-1}$ should have been previously collected so that the algorithm can be executed  from $t_{m}$ to $t_{m}+N_s+1$, an interval which we will define as  algorithm time step $t$. Ultimately, inferences are completed at $t_{m}+N_s+2$. Fig. \ref{fig:timetable} illustrates the timeline for the data and the algorithm. It is worth mentioning that the computation can be done simultaneously   with the task  in the adjacent algorithm step (e.g.  phase A of algorithm $t$, phase B of algorithm $t-1$ and phase C of algorithm $t-2$ can all be done simultaneously). 

 \begin{figure}[t!]
     \centering
    \centerline{\includegraphics[width=1\linewidth,trim={5cm  6cm 5cm 6cm },clip]{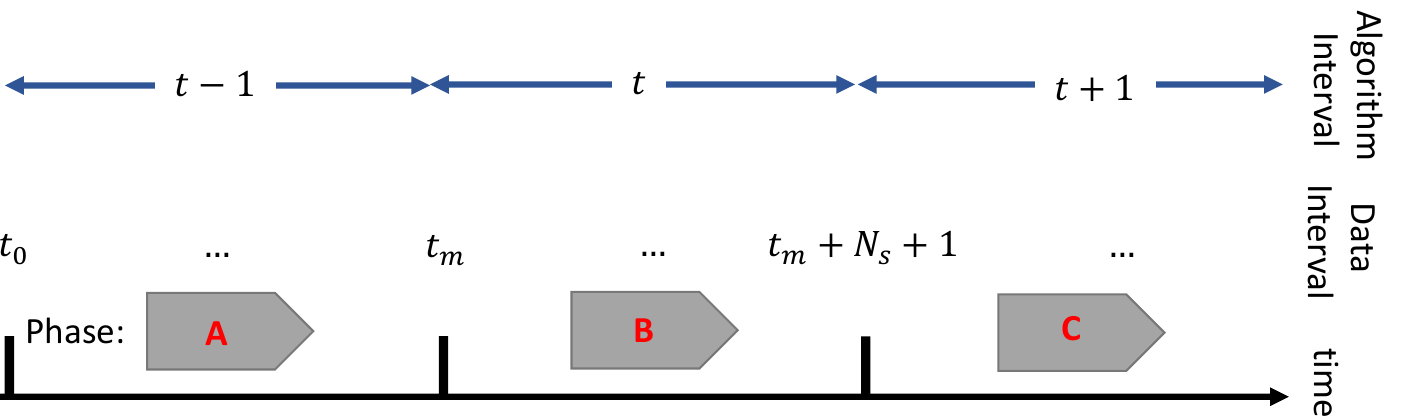}}
\caption{  Data timeline and different phases  of ARMCMC algorithm. For algorithm at time t: Phase (A) Data collection [ $N_s$ data points are packed for the next algorithm time step], Phase (B) Adjustment [the method is applying to the most recent  pack], (C) Execution [after the min evaluation of algorithm, the results will be updated on posterior distribution and any byproduct value of interest].}
 \label{fig:timetable}
   \end{figure}
 
According to Bayes' rule (\ref{eq:bays}) and assuming data points are independent and identically distributed ( i.i.d)  in   Eq. (\ref{eq:bays44444444444444}),  we have
\begin{equation}
P(\theta^t|[D^{t-1},D^t])=\frac{P\big(D^t|\theta^t,D^{t-1}\big)P(\theta^t|D^{t-1})}{\int P\big(D^{1}
|\tau^t,D^{t-1}\big)P(\tau^t|D^{t-1}) d\tau^t},
\label{eq:bys1}
\end{equation}
where $\theta^t$ denotes the parameters at current algorithm time steps. $P(\theta^t|D^{t-1})$ is the prior distribution over parameters, which is also the posterior distribution at the previous algorithm time step.
Probability $P\big(D^{t}
|\theta^t,D^{t-1}\big)$ is the likelihood function which is obtained by sampling from the one-step-ahead prediction:
\begin{equation}
\hat{Y}^{t|t-1}=F(D^{t-1},\theta^{t}),
\label{eq:bys22}
\end{equation}
where $\hat{Y}^{t|t-1}$ is a sample from the prediction of the output in (\ref{eq:bays44444444444444}). If the model in (\ref{eq:bys22}) is accurate, then the difference between the real
output and predicted is the measurement noise, (i.e.,
$Y^{t|t-1}-\hat{Y}^{t|t-1}=\nu$). Therefore, the model parameter may be updated as follows:
\begin{equation}
P\big(D^t|\theta^t,
D^{t-1}\big)=\prod_{t=t_m+1}^{t_m+N_s+1}
P_\nu\big(Y^{t|t-1}-\hat{Y}^{t|t-1}\big),
\label{eq:bys3}
\end{equation}
where $P_\nu$ is the probability distribution of noise. Note that there is no restriction on the type of noise probability distribution.  A good approximation can be a Gaussian distribution with  sample means and variances of the data as its parameters.


\subsection{Markov Chain Monte Carlo (MCMC)}
MCMC is often employed to compute the  posterior pdf numerically. The multidimensional integral in (\ref{eq:bys1}) is approximated by samples  drawn from the posterior pdf. The samples are first drawn from a different distribution called proposal distribution, denoted $q(\cdot)$, which can be sampled more easily compared to the posterior.
The main steps of the Metropolis-Hastings algorithm are listed as follows \cite{ninness2010bayesian}:

\begin{enumerate}
\item Set initial guess  $\theta_0$ while $P(\theta_0|Y)> 0$  for  $k=1$,
\item Draw candidate  parameter $\theta_{cnd}$, at iteration $k$, from the proposal distribution, $q(\theta_{cnd}|\theta_{k-1})$
\item Compute the acceptance probability,
\begin{equation}
\alpha(\theta_{cnd}|\theta_{k-1})=
\min\Big\{ 1,\frac{P(\theta_{cnd}|D)q(\theta_{k-1}|\theta_{cnd})}{P(\theta_{k-1}|D)q(\theta_{cnd}|\theta_{k-1})}\Big\},
\label{eq:accpt}
\end{equation}
\item Generate a uniform random number $\gamma$ in $[0,1]$,
\item ``Accept'' candidate if $\gamma\leq\alpha$ and ``ignore'' it if $\gamma>\alpha$,
\item Set iteration to $k + 1$ and go to step 2.
\end{enumerate}

\subsection{Precision and reliability}
Two important notions in probabilistic frameworks to compare results are precision ($\epsilon$) and reliability ($\delta$). The former represents the proximity  of  a sample to the ground truth, and the latter represents the probability that an accepted sample lies within $\epsilon$ of the ground truth. 

\emph{Lemma:} \pa{Let $P_k$ to be the probability of $k$ samples from MCMC, and $\mathbb{E} (P_{k})$ to denote their expected value. According to Chernoff bound, to achieve the required precision and reliability  $\epsilon, \delta \in [0,1]$,  as $ Pr\Big\{[P_{k} -\mathbb{E} (P_{k})]\leq \epsilon\Big\}\geq \delta$, the minimum number of samples ($k$) must satisfy the following relation \cite{tempo2012randomized}}
 \begin{equation}
 k\geq \frac{1}{2\epsilon^2} \log(\frac{2}{1-\delta}).
 \label{eq:chern}
 \end{equation}


\section{ARMCMC ALGORITHM}
\label{sec:armcmc}
At each algorithm time interval, ARMCMC recursively estimates the posterior pdf by drawing samples. The number of samples drawn is constrained by the desired precision and reliability, and the real-time requirement. On the other hand, the maximum number of data points in each data pack ($N_s$) is limited by the frequency of model variation, and the minimum is confined by the shortest required time, such that the algorithm is real-time. 

We propose a variable jump distribution that enables both exploiting and exploring. This will necessitate the definition of the TFF as a model mismatch measure to reflect current underlying dynamics of the data. We also prove that ARMCMC achieves the same precision and reliability with fewer samples compared to the traditional MCMC. Algorithm 1 summarizes ARMCMC.
   \vskip -.1in
\begin{algorithm}[!t]
   \caption{ARMCMC}
   \label{alg:example}
\begin{algorithmic}
\STATE \emph{Assumptions:} 1) roughly noise mean ($\mu_\nu$) 2) roughly noise variance ($\sigma_\nu$)  3) desired precision and reliability ($\epsilon_0,\delta_0$) 4) desired threshold for model mismatch ($\zeta_{th}$) \\
\STATE \emph{Goal:} Online calculation of parameters posterior distribution given the consecutive $t$-th pack of data ($P(\theta^t|D^t)$)
   \STATE {\bfseries Initialization:} Prior knowledge for $\theta^0_1$, $n=0$
   \STATE Consider desire precision and reliability ($\epsilon,\delta$)
   \REPEAT
   \STATE Put $t_0 = n*N_s+1$ from (\ref{eq:data}), $n++$
   \STATE Add new data pack to dataset $D^t$
   \STATE \emph{Model mismatch index:} $\zeta^t$ from (\ref{eq:bys5333})
   \IF{$\zeta^t<\zeta_{th}$}
   \STATE \emph{Reinforcement:}  set prior knowledge equal to the latest posterior of previous pack
   \STATE \emph{TFF:} $\lambda^t$ from (\ref{eq:bys522221})
   \ELSE
   \STATE \emph{Modification:} set prior knowledge $\theta^n_1$
   \STATE \textit{TFF:} $\lambda^t = 0$
   \ENDIF
   \STATE \emph{Set minimum iteration} $k_{min}$ from (\ref{eq:neval})
   \FOR{$k=1$ {\bfseries to} $k_{min}$}
   \STATE \emph{Proposal distribution:}
   \STATE ~~~ $\bullet$ draw $\lambda_k \sim U(0,1)$
   \STATE ~~~ $\bullet$ \emph{Variable jump distribution:}  $q_k^t(.)$ from (\ref{bys4})
   \STATE Draw $\theta^{t*}_k \sim q_k^t(.)$
   \STATE \emph{Acceptance rate:} $\alpha(.)$ from (\ref{eq:accpt})
   \STATE Draw $\gamma \sim U(0,1)$
   \IF{$\gamma\leq\alpha$}
   \STATE `Accept' the proposal 
   \ENDIF
   \ENDFOR
   \STATE {\bfseries Wait} to build  $D_{t_0}^{t_{m+N_s+1}}$ (algorithm sample time)
   \UNTIL{No data is obtained}
\end{algorithmic}
\end{algorithm}
\vskip -2in

\subsection{Variable jump distribution}
We propose a variable jump distribution (also known as a proposal distribution) to achieve faster convergence, thereby enabling real-time parameter estimation:
\begin{equation}
q_k^t(\theta^{t}|\theta_{k-1}^t)=
  \begin{cases}
    P(\theta^{t-1}|D^{t-1})       & \quad  \lambda_k \leq \lambda^t\\
   N(\mu_D,\sigma_\nu) & \quad  \lambda_k > \lambda^t
  \end{cases},
\label{bys4}
\end{equation}
where $\theta_{k-1}^t$ is the $(k-1)$-th parameter sample which is given by the $t$-th data pack throughout the MCMC evaluation. In each algorithm time sample, the averages of the second half of this quantity will construct $\theta^{t}$. 
$P(\theta^{t-1}|D^{t-1})$ is the posterior pdf of
the parameters at the previous algorithm time step, and $N(\mu_D,\sigma_\nu)$ is a Gaussian distribution with its mean and variance $\mu_D, \sigma_\nu$  computed using the empirical mean and variance of $D^{t-1}$ \pa{ if available. $\lambda_k$ is drawn from a uniform distribution between 0,1 to resemble a randomize algorithm.}

\pa{The hyperparameter $\lambda^t$  is called \textit{TFF} which is an adaptive threshold for the  $t$-th pack that takes inspiration from  classical system identification. This will regulate how previous knowledge affects the posterior pdf. Intuitively, a smaller value for $\lambda^t$  means that  there might be a larger/sudden change in the ground truth value, and thus more exploration is needed.} Conversely, a larger value of $\lambda^t$ is appropriate when $\theta$ is changing slowly, and therefore previous knowledge should be exploited. Exploiting this knowledge will lead to an overall better precision and reliability. 

\emph{Remark 1:} \pa{Note that according to \cite{bishop2006pattern}, ARMCMC is still a valid MCMC scheme since the variable jump policy does not violate the independency between samples. The reason is that the decision to  whether select the proposal from previous samples or not is purely stochastic. The only difference is proposing more reasonable candidates with an adaptive threshold of acceptance. Further elaboration will be presented in following sections.}

\subsection{Temporal forgetting factor (TFF)}
Depending on the changes in the distribution of the parameter $\theta$, a new sample can be drawn according to the \textit{modification} or the \textit{reinforcement} mode.  Modification is employed  to re-identify the distribution `from scratch' while reinforcement is employed to make the identified probability distribution more precise when it is not undergoing sudden change. \pa{To  simulating from the prior distribution, we use the inverse CDF method \cite{li2021improving}}. Therefore, we define a \textit{model mismatch index}, denoted $\zeta^t$, such that when it surpasses a predefined threshold ($\zeta^t > \zeta_{\text{th}}$), modification  is applied. Otherwise $\zeta^t$ is used to determine $\lambda^t$ as follows:
\begin{equation}
\lambda^t=e^{-|\mu_\nu-\zeta^t|},
\label{eq:bys522221}
\end{equation}
where $\mu_\nu$ is an estimation of the noise mean, based on the calculation of the expected value in Eq.  (\ref{eq:bays44444444444444}). Note that employing modification is equivalent to setting $\lambda^t = 0$. The model mismatch index $\zeta^t$ itself is calculated by averaging  the errors of the previous model given the current data:
\begin{equation}
\zeta^t=1/N_s\sum_{n =1}^{N_s} \Big(y_n^t-\mathop{\mathbb{E}}_{\theta \in \theta^{t-1}} (F(D^t(n),\theta))\Big), \zeta^0 = \infty
\label{eq:bys5333}
\end{equation}

\emph{Remark 2:} The model mismatch index accounts for all sources of uncertainty in the system. To calculate $\zeta_{th}$, one needs to precalculate the persisting errors between the predicted and measured data. In other words, $\zeta_{th}$  is an upper bound for the unmodeled dynamics, disturbances, noises, and any other sources of uncertainty in the system, which can be estimated via offline learning given the model capacity.

\emph{Remark 3:} To avoid numerical issues, the summation of probability logarithms are calculated. In addition, each data pair in the algorithm time sample is weighted based on its temporal distance to the current time. Therefore Eq. (\ref{eq:bys3}) is modified as:
\begin{equation}\begin{split}
&\log \big( P(\cdot) \big)=\sum_{t_m+1}^{t_m+N_s+1} \log P_\nu
(e^t),\\
e^t=&\Big(Y^{t}_n-F^t(D^{t-1}(n),\theta^{t})\Big)e^{-\rho(N_s-n)},
\label{eq:g1}
\end{split}\end{equation}
where  $\rho \in [0,1]$ is a design parameter that reflects the volatility of the model parameters, and $e^t=[e^t_1, ...,e^t_n,...,e^t_{N_s}]$. For systems with fast-paced parameters, $\rho$ should take larger values.

\subsection{Minimum required evaluation}
\label{sec:theorem}
\begin{theorem}
Let $\epsilon$ and $\delta$ be the desired precision and reliability. Furthermore, assume that the initial sample has an enough number of evaluations.  To satisfy the inequality in Eq. (\ref{eq:chern}),  the minimum number of samples $k$ in ARMCMC is calculated using this implicit relation:
\begin{equation}
k_{min} = \frac{1}{2\epsilon^2} \log(\frac{2}{\lambda^t(1-\delta)+2(1-\lambda^t)e^{-2\epsilon^2(1-\lambda^t)k_{min}}}).
\label{eq:neval}
\end{equation}
\end{theorem}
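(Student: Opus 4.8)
The plan is to re-run the Chernoff-bound argument that underlies the Lemma, but to replace its fixed failure target $1-\delta$ with a data-adaptive budget that credits the fraction of proposals recycled from the already-reliable previous posterior. First I would recall from the Lemma that the two-sided Chernoff bound gives $\Pr\{|P_{k}-\mathbb{E}(P_{k})|>\epsilon\}\le 2e^{-2\epsilon^2 k}$, and that ordinary MCMC fixes $k_{min}$ by setting this bound equal to $1-\delta$ and inverting, which yields Eq.~(\ref{eq:chern}). The whole task is therefore to identify the correct right-hand side that should sit in place of $1-\delta$ for ARMCMC.

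Next I would unpack the variable jump distribution in Eq.~(\ref{bys4}). Drawing $\lambda_k\sim U(0,1)$ and branching on $\lambda_k\le\lambda^t$ versus $\lambda_k>\lambda^t$ turns each of the $k$ proposals into an independent Bernoulli mixture: with probability $\lambda^t$ the candidate is recycled from $P(\theta^{t-1}|D^{t-1})$, and with probability $1-\lambda^t$ it is drawn afresh from $N(\mu_D,\sigma_\nu)$. By Remark 1 the stochastic branch does not break sample independence, so the chain is still a valid MCMC; crucially, only an expected $(1-\lambda^t)k$ of the samples are genuinely new, so the fresh block obeys the Chernoff bound with effective count $(1-\lambda^t)k$ and contributes a failure probability $2e^{-2\epsilon^2(1-\lambda^t)k}$. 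Invoking the theorem's hypothesis that the previous posterior was built from enough evaluations, the recycled block already sits at the reliability boundary and contributes exactly $1-\delta$.

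I would then combine the two branches by the law of total probability over the $\lambda_k$-split, which expresses the failure budget as the convex combination $\lambda^t(1-\delta)+(1-\lambda^t)\,2e^{-2\epsilon^2(1-\lambda^t)k}$. Equating the overall Chernoff bound $2e^{-2\epsilon^2 k}$ to this adaptive budget and solving for $k$ --- taking logarithms and dividing by $2\epsilon^2$ --- reproduces the implicit relation in Eq.~(\ref{eq:neval}). Because $k_{min}$ appears on both sides this is transcendental; I would close the argument by noting that the left-hand term $2e^{-2\epsilon^2 k}$ decreases from $2$ to $0$ while the right-hand term decreases only to the positive constant $\lambda^t(1-\delta)$, so the two curves cross exactly once, giving a unique root reachable by fixed-point iteration. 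The same comparison evaluated at the standard sample count shows the crossing occurs strictly earlier, which establishes the paper's ``fewer samples'' claim, while the sanity check $\lambda^t\to 1$ recovers the Lemma exactly.

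The main obstacle is the probabilistic bookkeeping that produces the adaptive budget: justifying that the recycled samples may be charged the flat target $1-\delta$ (which leans entirely on the ``enough evaluations'' hypothesis and on the previous step having certified reliability $\delta$), and making the effective-count heuristic $(1-\lambda^t)k$ rigorous rather than merely true in expectation --- ideally by conditioning on, or concentrating, the Binomial number of fresh draws. Equating the full-$k$ bound to the mixture budget, as opposed to merely bounding one by the other, is the interpretive step that most needs pinning down, and it is where I would spend the most care to keep the derivation honest.
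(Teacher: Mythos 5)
Your proposal matches the paper's own proof in both structure and substance: you split the $k$ proposals into an expected $\lambda^t k$ recycled from the previous posterior (credited reliability $\delta$ via the ``enough evaluations'' hypothesis) and $(1-\lambda^t)k$ fresh Gaussian draws governed by the Chernoff bound with effective count $(1-\lambda^t)k$, combine them convexly, and substitute the resulting reliability back into Eq.~(\ref{eq:chern}) to obtain the implicit relation --- exactly the paper's three-step argument, merely phrased in terms of failure probabilities $1-\delta$ rather than reliabilities $\delta$. Your added remarks on uniqueness of the fixed point and on the need to concentrate the Binomial count of fresh draws go beyond what the paper provides (which also works only in expectation), but they do not change the route.
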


\begin{proof}
\textit{Samples from previous pdf:} According to the variable jump distribution in (\ref{bys4}), given $k$ samples, the expected number of samples drawn from the previous posterior pdf ($P(\theta|D^{t})$) is $\lambda^t k$. 
By assumption, the algorithm has already drawn at least $k$ samples in the previous algorithm time-step.
Consequently,  by (\ref{eq:chern}), the expected number of samples with distances less than $\epsilon$ from $\mathbb{E}(P_k)$ are drawn from a previous distribution of at least $\lambda k \delta$.



\textit{Samples from Gaussian:} By (\ref{bys4}), there are $k_0 =(1-\lambda^t) k$ samples drawn in expectation. 
Based on this assumption, we have $Pr\Big\{\{P_{k} -\mathbb{E} (P_{k})\}\leq \epsilon\Big\}\geq \delta_0$, where $\delta_0$ is given by rearranging (\ref{eq:chern}):

\begin{equation}
    \delta_0 = 1-2 e^{-2\epsilon^2k_0}.
    \label{eq:delta}
\end{equation}

Thus, the expected number of samples with distances less than $\epsilon$ from $\mathbb{E}(P_k)$ are at least $\delta_0 (1-\lambda^t) k$.

\begin{figure}[t!]
     \centering
     \centerline{\includegraphics[width=1\linewidth,trim={1.5cm  0.2cm 1.5cm 0.5cm },clip]{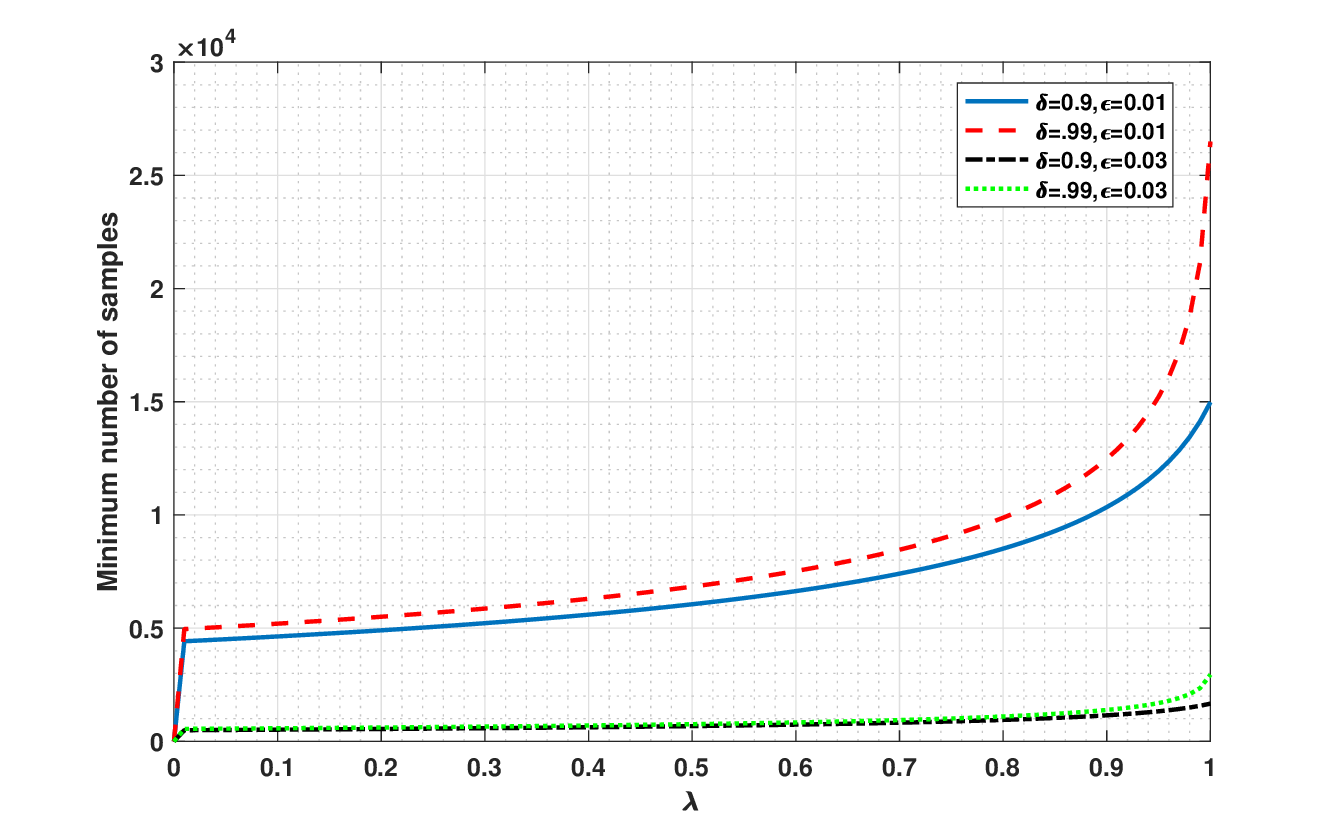}}
\caption{$K_{min}$ with respect to $\lambda$ for some values of $\epsilon,\delta$ in ARMCMC. (for $\lambda=1$ evaluation for ARMCMC is equivalent to MCMC)}
 \label{fig:lambda}
\end{figure}

\textit{Overall reliability:} The total expected number of samples with distances less than $\epsilon$ from $\mathbb{E}(P_k)$ are the summation of the two parts mentioned above. Hence, it is obtained through  dividing by $k$:
\begin{align}
    \delta_1 &= \frac{(\lambda^t k\delta) + (\delta_0 (1-\lambda^t) k)}{k} 
    \label{eq:proof}
\end{align}
\pa{With the new obtained reliability, which is greater than the desired one, we can safely decrease  the number of evaluations.}  For the sake of illustration, Fig. \ref{fig:lambda} presents the minimum required number of evaluations with respect to $\lambda$ for different precisions and reliabilities. As it can be seen, the ARMCMC is equal to MCMC if  $\lambda$ is always set to one, and have trivial solution for $\lambda=0$. The number of evaluations in ARMCMC mitigates as the validity of the previous model increases. 
\end{proof}

\section{RESULTS}
\label{sec:result}
In this section, we demonstrate the performance of the proposed approach with two examples. First, we employ ARMCMC to identify parameters in the soft bending actuator model and compare the results by Recursive Least Square (RLS) and Particle Filter (PF). RLS is the recursive version of the least squares solution for online data, which utilizes the matrix inversion lemma and Riccati equation as a special case of the Kalman filter \cite{simon2006optimal}. Sequential Monte Carlo methods, also known as PF, comprise a collection of Monte Carlo algorithms that address filtering problems encountered in Bayesian statistical inference and signal processing \cite{kantas2015particle}.  The goal is to use a sequence of data to recursively characterise the joint posterior distributions of input and output given the state transition density  \cite{pitt1999filtering}. In the second example, we evaluate  our method on the Hunt-Crossley model when given a reality-based simulation, and compare it with a simple MCMC and RLS. All the results/code are available on \url{https://github.com/pagand/ARMCMC}.

\subsection{Fluid soft bending actuator} 
Consider the dynamic model of a fluid soft bending actuator, given by \cite{wang2019parameter}:
\begin{equation}\begin{split}
\ddot{\alpha} = q_1(p-p_{atm})&-q_2 \dot{\alpha}-q_3 \alpha, \\
u_c \sign(p_s - p)\sqrt{|p_s-p|} =& q_4 \dot{p}+q_5 \dot{p}p, u_d =0,\\
 u_d \sign(p - p_{atm})\sqrt{|p-p_{atm}|} &= q_6 \dot{p}+q_7 \dot{p}p, u_c=0,
\label{eq:dyn_2end}
\end{split}
\end{equation}
where $\alpha$ is the angle of the actuator and $u_c, u_d$ are  the control inputs for retraction and contraction. Also, $p,p_s,p_{atm}$ are the current, compressor and atmosphere pressure respectively. For this example, we assume $q_1 = 1408.50, q_2 =132.28, q_3 = 3319.40$ are known and $p_{atm} = 101.3$ kPa, $p_s = 800$ kPa. We are trying to identify the remaining four  parameters ($q_4,...,q_7$). To this end, we assume the  hybrid model below:
\begin{equation}
u ~\sign(\Delta p)\sqrt{|\Delta p|} = \theta_1 \dot{p}+\theta_2 \dot{p}p, u=\{u_c, u_d\},
    \label{eq:dyn_2nd2}
\end{equation}
where $\Delta p$ is either $(p_s-p)$ for retraction or $(p-p_{atm})$ for contraction, and $\theta_1$ represents $q_4,q_6$, while $\theta_2$ represents $q5,q7$. For RLS, as the range of parameters are small,  we scale the input vector by a factor of $10^7$. Given the input ($u_c, u_d$) and the output ($p,\dot{p}$), we want to identify the parameter and estimate the current angle of the actuator, assuming its initial position is at the origin. 

\subsubsection{Modeling for PF}
First, we  derive the first order differential relation as follows:
\begin{equation}\begin{split}
    \dot{x}_2 =& q_1(x_3-p_{atm}) -q_2 x_2 - q_3 x_1 \\
    \dot{x}_3 =& \frac{u~ \sign(\Delta p)\sqrt{\Delta p}}{x_4+x_5 x_3}
\end{split}
\label{eq:pf11}
\end{equation}
where the state space is $X=[x_1,\cdots,x_5]^T= [\alpha,\dot{\alpha},p,\theta_1,\theta_2]^T$ and $\dot{x}_1 = x_2,\dot{x}_4 = 0 , \dot{x}_5 = 0$ which is a random walk. Eq. (\ref{eq:pf11}) is a nonlinear and hybrid model due to the relation of $x_3$ and the noise is reflected in the measurement sensor $(y=[p,\dot{p},u_c,u_d]^T+\nu)$.
For PF, we need the state transition matrix which is the discretized version of (\ref{eq:pf11}), however, due to complexity, we need to further simplify the computation.

\subsubsection{Quantitative analysis}
The data sample time is $T=1$ ms and each data pack includes 100 samples which results in an algorithm sample time equal to $T_s=100$ ms. The running time of the RLS approach is approximately $2$ ms, however, for PF and ARMCMC it is limited by 10 ms, and 80 ms, respectively. Point estimation is obtained from solution of the Maximum A-Posteriori (MAP). In the ARMCMC, we suggest the mode at the modification phase and the mean during the reinforcement phase; this estimate is denoted as ARMCMC Mode and Average Point estimate (AR-MAP). The true parameters with the means and standard deviations of the posterior probability using AR-MAP are shown in Table \ref{tab:par}. The parameter point estimate results are shown in
 Fig. \ref{fig:teta_2nd}. The estimation errors for $\theta_1,\theta_2$, and prediction of angles ($\Delta \theta_1,\Delta \theta_2,\Delta \alpha$) for three recursive approaches are illustrated in Table \ref{tab:comp}. As observed, AR-MAP offers effective parameter tracking while PF and RLS failed to converge to the ground truth, even when it remains constant. The reason for this is the inconsistency of the model when transitioning between retraction and contraction. However, the inference time for RLS and PF is relatively lower. To compare the actual signal of interest, we plotted the estimation of the angle in Figure \ref{fig:alfa_2nd}, which demonstrates the superiority of the final angle prediction for the AR-MAP method.

 
 \begin{figure}[t!]
\begin{center}
\centerline{\includegraphics[width=1\linewidth,trim={2cm  0.6cm 1.2cm .8cm },clip]{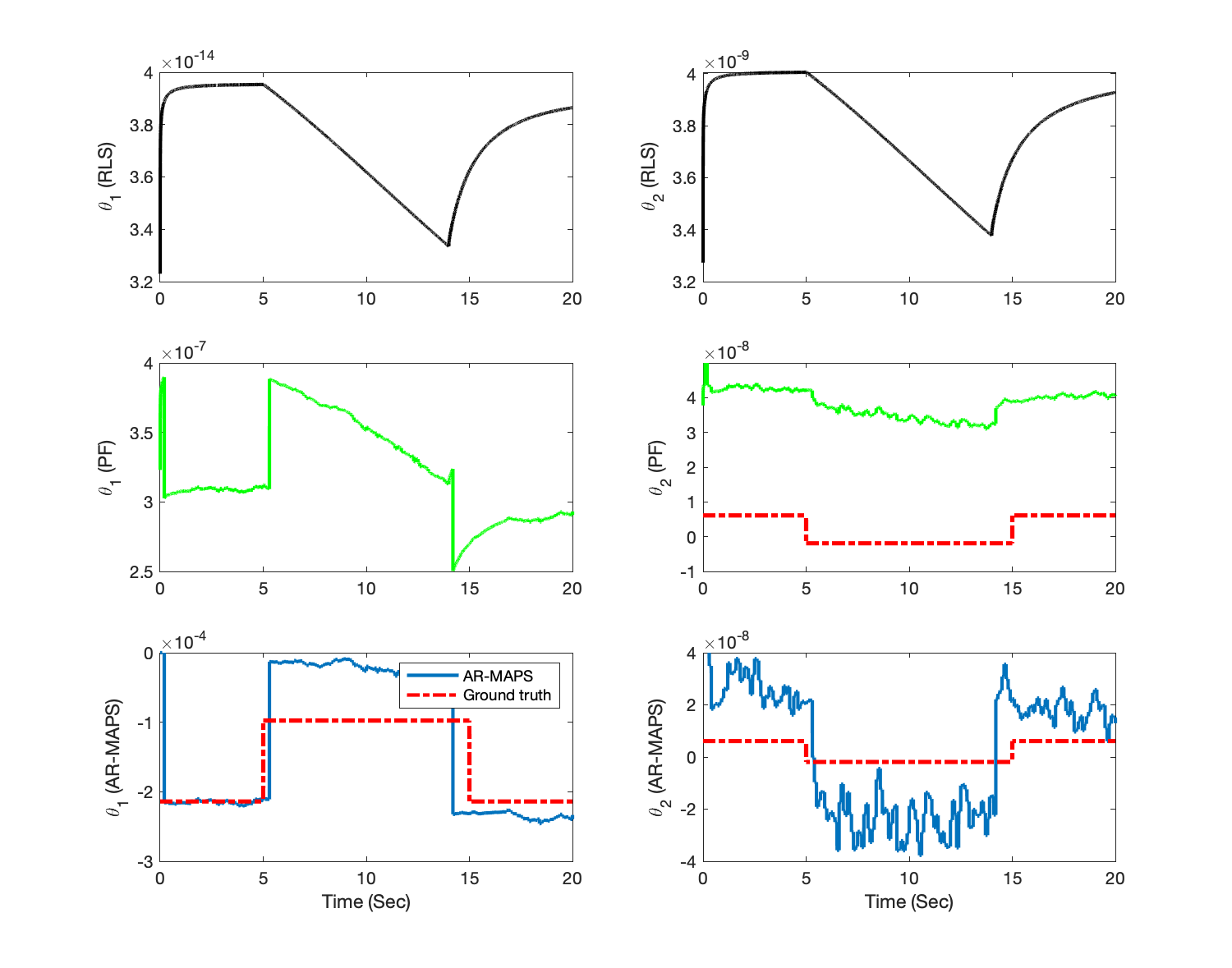}}
  \caption{Parameter variation for RLS, AR-MAP, and PF }
  \label{fig:teta_2nd}
  \end{center}
\vskip -0.2in
\end{figure}
\begin{figure}
\begin{center}
\centerline{\includegraphics[width=0.8\linewidth,trim={1cm  0.4cm 1cm .8cm },clip]{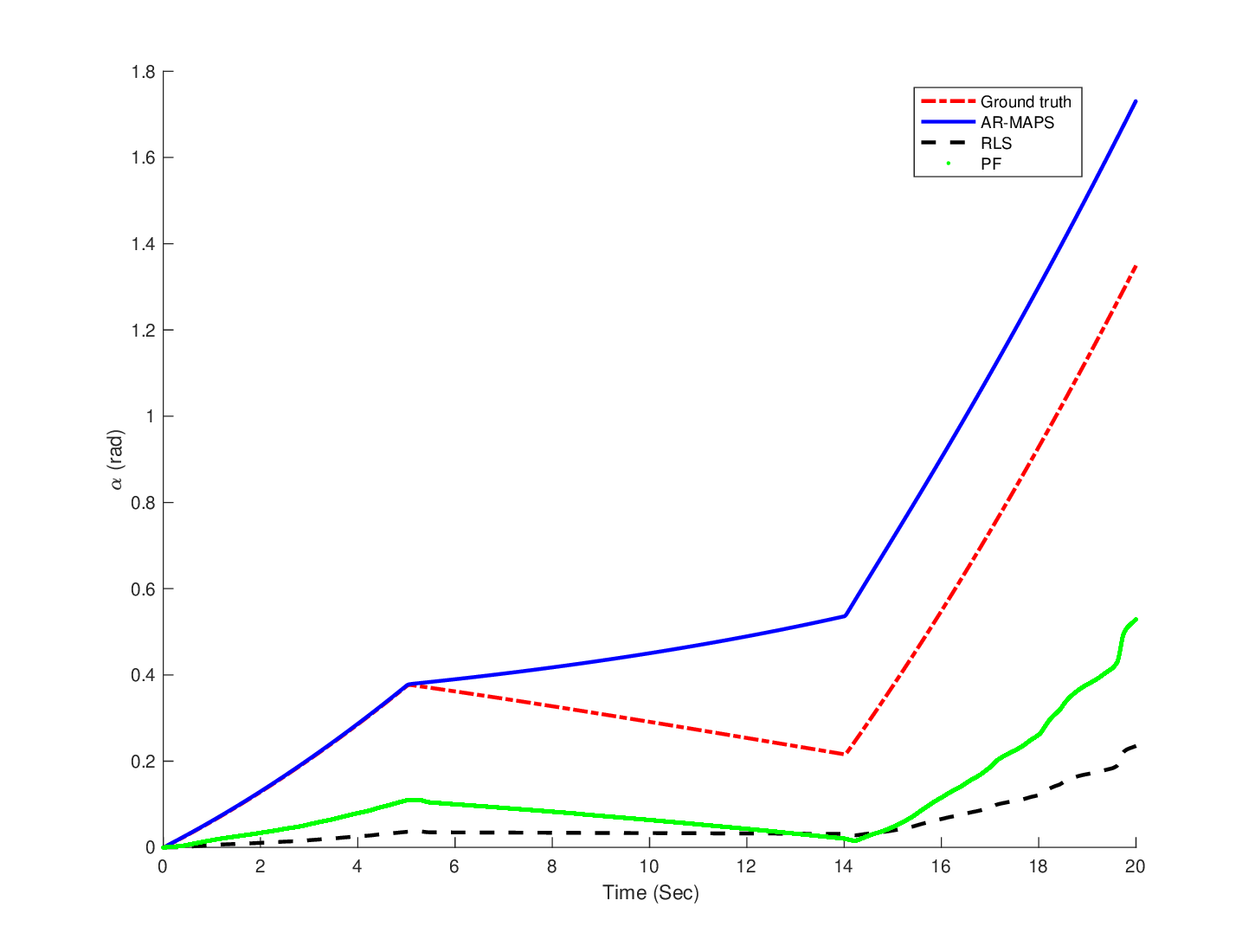}}
 \caption{Angle of the actuator comparison of RLS and AR-MAP for soft bending actuator. }
  \label{fig:alfa_2nd}
    \end{center}
\vskip -0.2in
\end{figure}

\begin{table}[!t]
\caption{Summery of parameter estimation for AR-MAP}
\vskip 0.15in
\begin{center}
\begin{small}
\begin{sc}
\begin{tabular}{lcccr}
\toprule
Parameter &  $q_4$&  $q_5$ &  $q_6$ & $q_7$  \\
Scale &  $10^{-4}$&  $10^{-9}$ &  $ 10^{-5}$ &  $10^{-9}$  \\
\midrule
True  &  $-2.14$ & $6.12 $ & $-9.76 $ &$-1.90 $\\
Posterior mean   &$-1.89$& $1.14$ & $-11.43$ &$-0.73$\\
Posterior s.d.  &$0.0041$& $0.21$ & $0.0034$&$ 0.31$\\
\bottomrule
\end{tabular}
\end{sc}
\end{small}
\end{center}
\vskip -0.1in
\label{tab:par}
\end{table}

\begin{table}[!t]
\caption{Comparison of RLS, PF, and AR-MAP for fluid soft bending actuator}
\vskip 0.15in
\begin{center}
\begin{small}
\begin{sc}
\begin{tabular}{lccr}
\toprule
Metric & RLS & PF & AR-MAP \\
\midrule
$\|\Delta\theta_1\|$ &$0.0235$&$0.0233$&$0.0089$\\
$\|\Delta \theta_2\|\times 10^{-6}$&$0.60053$&$4.8517$&$2.6951$ \\
$\|\Delta \alpha\|$&$61.4862 $&$50.6386$ &$32.4846$\\
 Running time&$2$ms&$10$ms& $80$ms\\
\bottomrule
\end{tabular}
\end{sc}
\end{small}
\end{center}
\vskip -0.1in
\label{tab:comp}
\end{table}



\subsection{Hunt-Crossley model}
In this section, we demonstrate ARMCMC by identifying parameters of the Hunt-Crossley model, which represent an environment involving a needle contacting soft material. The needle is mounted as an end-effector on a high-precision robotic arm, which switches between two modes: free motion and contact.%
For medical applications, effectively mounting force sensors on the needle is typically infeasible due to sterilizing issues. 
Due to abrupt changes in the model parameters when the contact is established or lost, online estimation of the force is extremely challenging.

\subsubsection{Contact dynamic model}
Consider the dynamics of contact as described by the Hunt-Crossley model, which is more consistent with the physics of contact than classical linear models such as Kelvin-Voigt \cite{haddadi2012real}. In order to overcome the shortcomings of linear models, \cite{hunt1975coefficient} proposed the following hybrid nonlinear model:
\begin{equation}
f_e(x(t))=\Bigg\{\begin{matrix}
K_ex^{p}(t)+B_ex^{p}(t)\dot{x}(t) & x(t)\geq 0\\
0& x(t)<0
\end{matrix},
\label{eq:huntt}
\end{equation}
in which $K_e, B_ex^p$  denote the nonlinear elastic and viscous force
coefficients, respectively. The parameter $p$ is typically between
$1$ and $2$, depending on the material and the geometric properties
of contact. Also, $x(t), \dot{x}(t), f_e$ are the current position, velocity (as input $X$), and contact force (as output $Y$) in Eq. (\ref{eq:bays44444444444444}). If $x \geq 0$, then the needle is  inside the soft material. 
$K_e, B_e, p$ are three unknown parameters ($\theta$ in Eq. (\ref{eq:bays44444444444444})) that need to be estimated. 
\pa{
A nonlinear hybrid model based on a reality-based soft environment is considered \cite{abolhassani2007needle}.
To relax the nonlinearity in the parameters, we use log:}

\begin{equation}\begin{split}
\log(f_e)&=\log(K_ex_s^p+B_e\dot{x}_sx_s^p),\\
\log(f_e)=&p\log(x_s)+\log(K_e+B_e\dot{x}_s).
\label{eq:hnt}
\end{split}\end{equation}
For RLS, we also need to further simplify the relation by assuming $B_e/K_e \dot{x}_s<<1$. Note that the vector of parameters $(\theta)$ are not independent, which may lead to divergence. 

\begin{equation}\begin{split}
\log(1+B_e/K_e\dot{x}_s)&\approx  B_e/K_e\dot{x}_s,\\
\log(f_e)=p\log(x_s)+&\log(K_e)+B_e/K_e\dot{x}_s.
\label{eq:hnt3}
\end{split}\end{equation}
\begin{equation}\begin{split}
U=&[1,\dot{x}_s,\log(x_s)],\\
\theta=&[\log(K_e),B_e/K_e,p]^T.
\label{eq:hnt44}
\end{split}\end{equation}

\subsubsection{Setup}

The data structure is the same as the previous simulation.  Prior distribution of all three parameters ($K_e, B_e, p$) are initialized to $N(1,0.1)$ (a normal distribution with
$\mu=1$ and $\sigma=0.1$) as a non-informative initial guess, whereas after applying the control effort and availability of the first data pack, these distributions are updated.  Moreover, as more data is collected, the spread of the posterior pdf decreases. After around 5 seconds, the needle goes outside of the soft material, and has zero force; this is equivalent to all parameters being set to zero. 
The color-based visualization of the probability distribution over time is used for the three parameters in Fig. \ref{fig:teta_pMCMC}. During the period of time that the entire space is blue (zero probability density), there is no contact and the parameter values are equal to zero.

Since we are taking a Bayesian approach, we are able to estimate the entire posterior pdf. 
However, for the sake of illustration, the point estimates are computed from the ARMCMC algorithm by using AR-MAP method. The results are shown in Fig. \ref{fig:teta_MCMC} for the time-varying parameters $\theta_1=K_e, \theta_2=B_e,\theta_3=p$. During the times that RLS results are chattering due to the use of saturation (if not, the results would have diverged), the needle is transitioning from being inside the soft material to the outside or vice versa. In addition, due to the assumption (\ref{eq:hnt3}), performance can decrease even when there is no mode transition. 
Although in the RLS approach estimated parameters suddenly diverged during free motion as the regression vectors are linearly dependent,  with ARMCMC this is not an issue. There is a delay in estimation of ARMCMC which is higher when changing from a contact to a free space. The reason is the existing noise in the measurement which requires the MCMC estimator to have a higher threshold ($\zeta_{\text{th}}$).  The result of  force prediction is presented in Fig. \ref{fig:fe_prd_MCMC}, which shows the effect of two different identification approaches. This probability of interest can be easily obtained by deriving the parameter density at one’s disposal. As we can see, The RLS, and MCMC suffer from numerous high frequency responses due to the level of uncertainty and identification error. 

\subsubsection{Quantitative comparison}
Quantitative details of comparing a naive point estimate of the ARMCMC algorithm by averaging the samples, ARMCMC Averaging Point estimate (AR-AP), and the RLS method are listed in Table \ref{tab:ver}. This reveals an improvement of over 70\%  in the precision of all model parameters throughout the study by using the Mean Absolute Error (MAE) criteria. The force estimation error also had more than 55\% improvement. Among parameters, the viscose ($B_e$) has the largest error in the RLS method since it is underestimated due to the restrictive assumption in Eq. (\ref{eq:hnt3}).  The AR-MAP approach uplifts the performance of the parameter identification and the force estimation. 
The ARMCMC algorithm paves the way for successful online identification of model parameters while restrictive presumptions are neglectable. 
Meanwhile, by employing the strength in the Bayesian paradigm,  this powerful tool can cooperate different sources of knowledge.
The recursion in the Bayesian optimization framework provides a suboptimal solution for the identification of nonlinear, hybrid, non-Gaussian systems. We also compare ARMCMC to MCMC. 
For real-time implementation, MCMC requires more time to converge. In this example, with $\lambda=0.7$, the value  of $k_{min}$ is $15000$ for MCMC but only $6000$ for ARMCMC with $\epsilon=0.01,\delta=0.9$. Two possible ways to address this drawback in MCMC are to reduce the number of samples to 5000 per algorithm iteration (denoted MCMC-1 in Table \ref{tab:ver}), which results in worse precision and reliability compared to ARMCMC, or to increase the algorithm sample time to 0.2 (denoted MCMC-2 in Table \ref{tab:ver})  which would cause more delay in the estimation result and slower responses to changes in the parameter. 


 \begin{figure}[t!]
\begin{center}
\centerline{\includegraphics[width=1\linewidth,trim={1cm  0.4cm 1cm 1cm },clip]{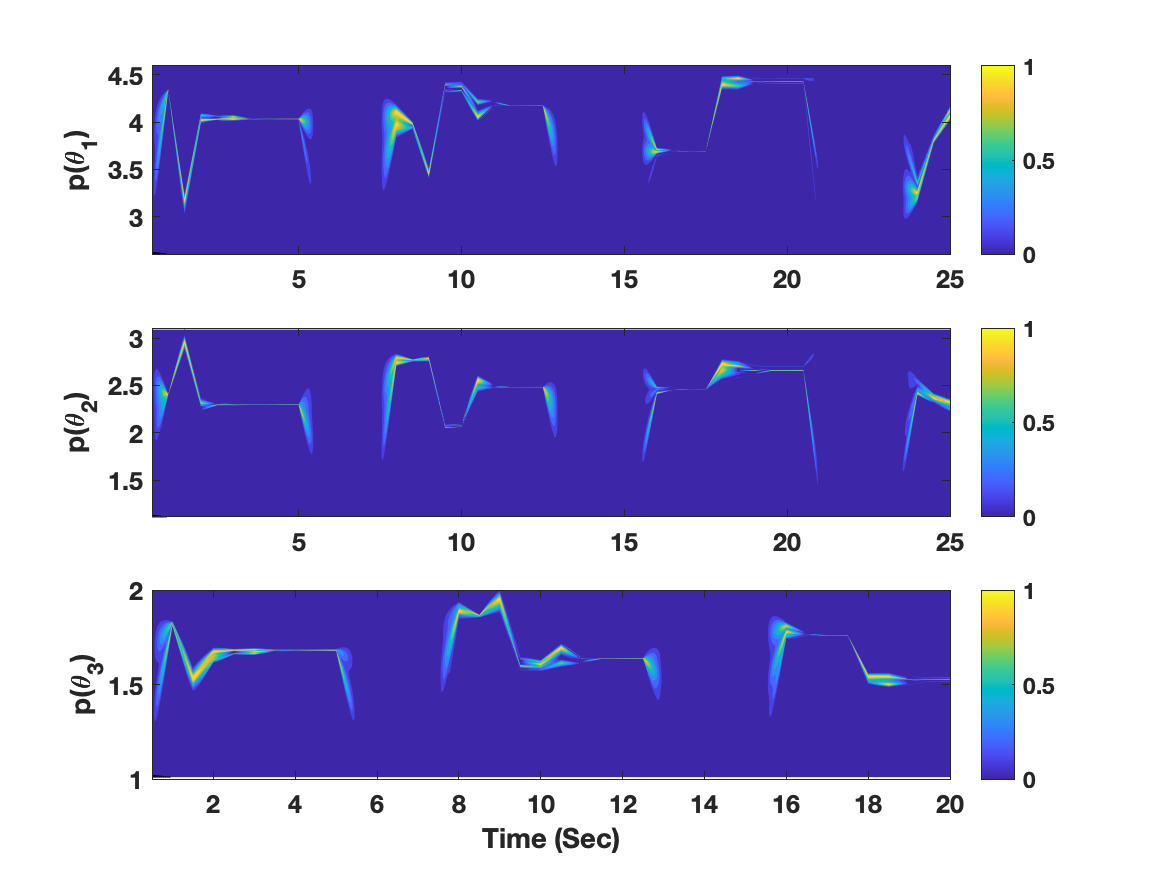}}
  \caption{Probability distribution of parameters ($\theta_1=K_e, \theta_2=B_e,\theta_3=p$)  using ARMCMC.}
  \label{fig:teta_pMCMC}
\end{center}
\vskip -0.2in
\end{figure}



\begin{figure}[t!]
\begin{center}
\centerline{\includegraphics[width=1\linewidth,trim={.8cm  .4cm 1cm 0.6cm },clip]{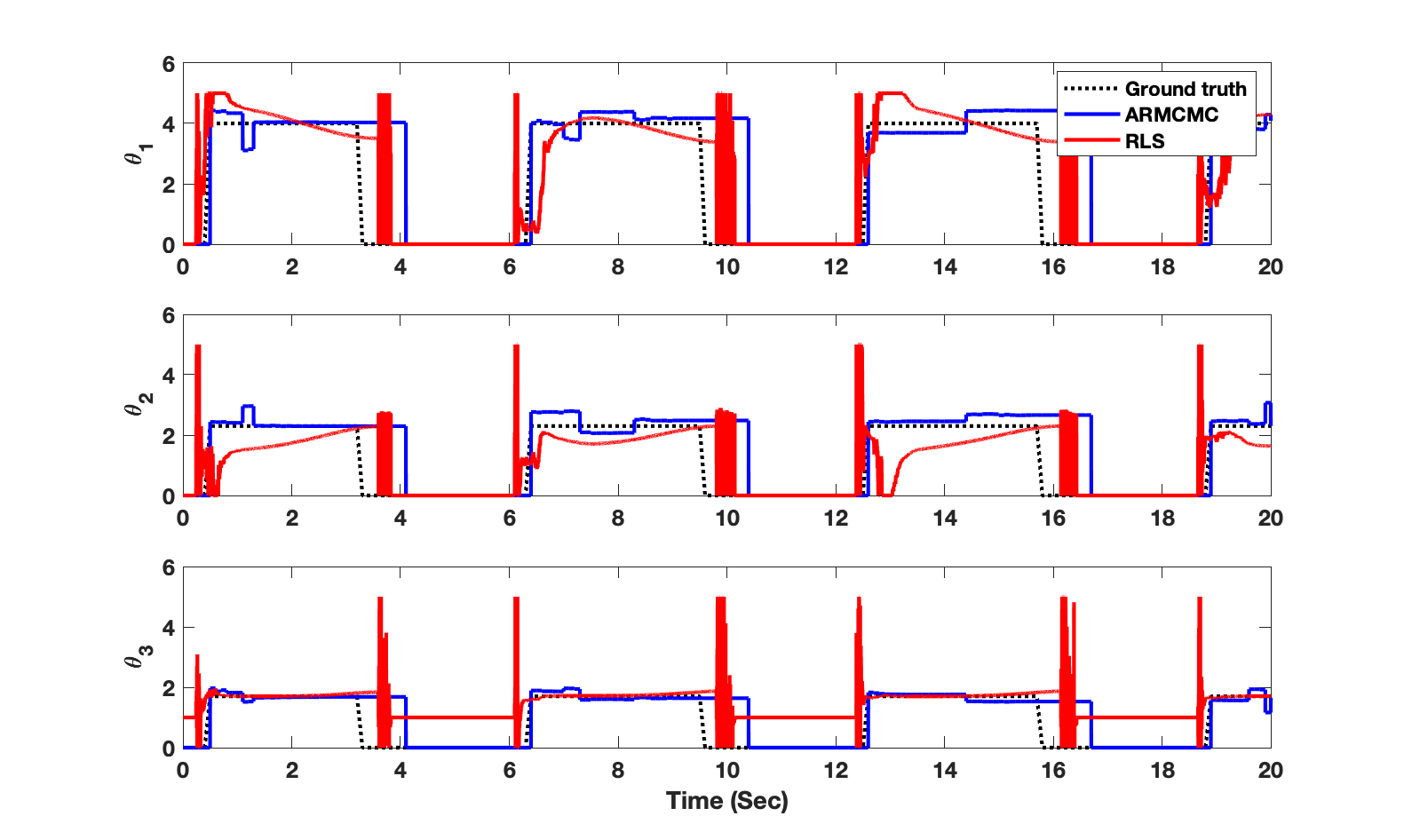}}
 \caption{Model parameters ($\theta_1=K_e, \theta_2=B_e,\theta_3=p$)  point estimation in AR-MAP.}
 \label{fig:teta_MCMC}
\end{center}
\vskip -0.2in
\end{figure}

\begin{figure}[t!]
\begin{center}
\centerline{\includegraphics[width=.8\linewidth,trim={1cm  0.3cm 0.5cm 0.5cm },clip]{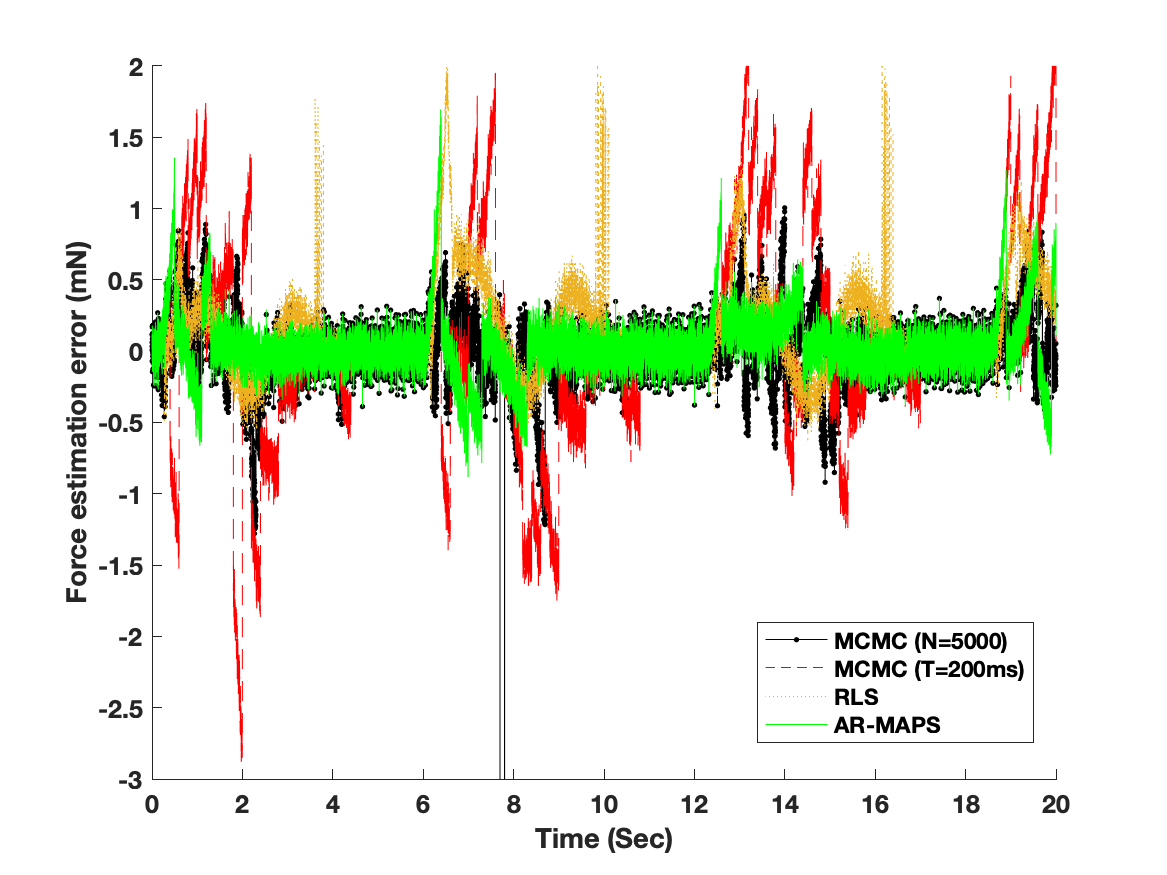}}
\caption{Force prediction error profile. }
\label{fig:fe_prd_MCMC}
\end{center}
\vskip -0.2in
\end{figure}


\begin{table}[!t]
\caption{Comparison of RLS and point estimate of ARMCMC and MCMC for environment identification.}
\vskip 0.15in
\begin{center}
\begin{small}
\begin{sc}
\begin{tabular}{lcccr}
\toprule
Errors (MAE) &  $K_e$&  $B_e$ &  $p$ & $F_e$  (mN)  \\
\midrule
RLS  &  0.5793 & 0.9642 & 0.3124&51.745\\
MCMC-1    &0.6846& 0.8392 & 0.3783 &76.695\\
MCMC-2  &0.7294& 0.9964 & 0.4195& 101.88\\
AR-AP      &0.0774 & 0.0347 & 0.0945&33.774\\
AR-MAP   & 0.0617 & 0.0316 & 0.0756& 31.659\\
\bottomrule
\end{tabular}
\end{sc}
\end{small}
\end{center}
\vskip -0.1in
\label{tab:ver}
\end{table}

\section{CONCLUSION}
\label{sec:con}
This paper presented an adaptive recursive MCMC algorithm for online identification of model parameters with full probability distribution. When applied to systems involving abrupt changes of model parameters, conventional approaches suffer from low performance. 
Results on the Hunt-Crossley and fluid soft bending actuator as nonlinear hybrid dynamic models was compared with well-known conventional identification approaches.
The proposed method adapted quickly to abrupt changes and relaxes the prerequisite conditions in the parameters. \pa{ Since ARMCMC is a more general approach, it can be applied to a wider range of systems. In the case of single-mode nonlinear systems with  LIP condition, one may use RLS which provides the best result 40x faster compared to ARMCMC. However,  for hybrid/multi-modal systems with fairly accurate parametric models, ARMCMC can output reasonable online inferences. They can be fairly accurate since the uncertainty in the models can be captured in the probability distribution of the parameters. However, this option is not available for conventional approaches like RLS or KF. For PF as another sampling-based methods for online Bayesian optimization, it relies on  the dynamical model of evolution to enhance the acceptance of the proposal distribution. Nonetheless, in the case of parameter identification, as the evolution model for parameters is a random walk, there is no extra knowledge for the estimator. PF as a sequential Monte Carlo (SMC) method assumes a connectivity in updating parameters in the resampling phase, therefore, ARMCMC surpasses PF performance in parameter identification problems.}

\bibliographystyle{IEEEtran} 
\bibliography{myref}

\end{document}